\def\eqref#1{equation~\ref{#1}}
\def\1{\bm{1}}
\def\vv{{\bm{v}}}
\def\vw{{\bm{w}}}
\def\vx{{\bm{x}}}
\DeclareMathAlphabet{\mathsfit}{\encodingdefault}{\sfdefault}{m}{sl}
\SetMathAlphabet{\mathsfit}{bold}{\encodingdefault}{\sfdefault}{bx}{n}
\newcommand{\R}{\mathbb{R}}
\DeclareMathOperator*{\argmax}{arg\,max}
\pgfplotsset{width=10cm,compat=1.9}
\newcommand{\loss} {\mathcal{L}}
\renewcommand{\grad}{\nabla}
\newcommand{\eat}[1]{}
\newtheorem{theorem}{Theorem}[section]
\newtheorem{proposition}[theorem]{Proposition}
\newtheorem{lemma}[theorem]{Lemma}
\theoremstyle{plain}
\newenvironment{manualproposition}[1]{%
  \manualpropositioninner
}{\endmanualpropositioninner}
\theoremstyle{definition}
\theoremstyle{remark}
\title{Improving Learning to Optimize Using Parameter Symmetries}
\author{Guy Zamir\thanks{Work done while a student at the University of California, San Diego.} \\
University of Wisconsin-Madison\\
\texttt{gzamir@wisc.edu} \\
\And
Aryan Dokania \\
University of California, San Diego \\
\texttt{adokania@ucsd.edu} \\
\AND
Bo Zhao \\
University of California, San Diego \\
\texttt{bozhao@ucsd.edu} \\
\And
\hspace{85pt} Rose Yu \\
\hspace{85pt} University of California, San Diego \\
\hspace{85pt} \texttt{roseyu@ucsd.edu} \\
}
\begin{document}

\maketitle

\begin{abstract}
We analyze a learning-to-optimize (L2O) algorithm that exploits parameter space symmetry to enhance optimization efficiency. Prior work has shown that jointly learning symmetry transformations and local updates improves meta-optimizer performance \citep{zhao2023improving}. Supporting this, our theoretical analysis demonstrates that even without identifying the optimal group element, the method locally resembles Newton’s method. We further provide an example where the algorithm provably learns the correct symmetry transformation during training. To empirically evaluate L2O with teleportation, we introduce a benchmark, analyze its success and failure cases, and show that enhancements like momentum further improve performance. Our results highlight the potential of leveraging neural network parameter space symmetry to advance meta-optimization.
\end{abstract}

\section{Introduction}
Optimization is fundamental to machine learning. While traditional hand-crafted optimizers such as SGD and Adam have proven effective across diverse tasks, recent advances in learning-to-optimize (L2O) have demonstrated the potential to automatically discover adaptive optimizers that outperform manually designed ones \citep{andrychowicz2016learning, finn2017model}. 
By parameterizing the optimization process and learning from a population of models, L2O methods derive update rules that generalize across multiple problem instances. 
Despite this progress, existing L2O methods primarily learn local update rules and often overlook global structural properties of the parameter space. 


Recent work explores incorporating parameter space symmetry into L2O, leveraging transformations that move parameters within symmetry orbits before applying updates \citep{zhao2023improving}.
This builds on the observation that different points in an orbit can exhibit significantly different gradients and learning dynamics \citep{kunin2021neural}, suggesting that moving across orbits before updating could lead to faster convergence and better generalization. 
By integrating this notion of teleportation into L2O, this method provides a new perspective on optimization as a structured exploration of parameter space rather than a purely local search process.

To support this approach, we provide theoretical insights showing that teleportation can locally resemble Newton’s method. 
Furthermore, we prove that under certain conditions, L2O can learn optimal teleportation strategies that improve convergence rates. 
Additionally, we introduce a benchmark for evaluating L2O with teleportation, analyze its strengths and limitations, and demonstrate practical improvements on classification models with larger parameter spaces.  
By bridging symmetry-aware optimization and meta-learning, our results highlight the potential of leveraging symmetry—a fundamental property of neural network parameter spaces—to enhance meta-optimization.
Learning how to teleport also allows one to leverage the benefits of teleportation without performing the potentially expensive optimization on group orbits.

\section{Related Work}
\paragraph{Learning to Optimize (L2O).}

Optimization-based meta-learning, also known as learning to optimize, aims to improve optimization strategies by training models to learn task-specific optimizers. The pioneering work of \citet{andrychowicz2016learning} proposed LSTMs as meta-optimizers, inspiring subsequent research on gradient-based meta-learning. \citet{finn2017model} introduced Model-Agnostic Meta-Learning (MAML), which adapts quickly to new tasks via second-order gradients, while \citet{nichol2018first} proposed Reptile, a first-order variant reducing computational overhead. Extensions include implicit gradient-based methods like iMAML \citep{rajeswaran2019meta} and Meta-SGD \citep{li2017meta}, which refine adaptation efficiency. Recent advances focus on improving scalability and generalization, such as learned optimizers \citep{metz2019understanding, metz2022practical}, learned step-size schedules \citep{wichrowska2017learned}, and curriculum-based meta-training \citep{zhou2023curriculum}. 
\citet{zhao2023improving} propose integrating parameter symmetry in L2O. We provide additional theoretical justification on the effectiveness of incorporating symmetry and improve their algorithm by including a momentum term.

L2O is also closely connected to recent work in weight space learning, where neural network weights are treated as data objects for tasks such as model analysis and generation \citep{schurholt2025neural}. 
These approaches learn from distributions over trained neural network weights to model network populations \citep{kofinas2024graph, lim2024graph, zhou2023permutation, vo2024equivariant, zhou2024universal, kalogeropoulos2024scale}.
Similarly, L2O methods operate over distributions of optimization tasks and aim to generalize across diverse initializations. 
In weight space learning, parameter symmetries often play a key role in building equivariant meta-models. 
In this paper, we explore another use of symmetry, by enabling an additional degree of freedom in the learned optimizers.

\paragraph{Parameter symmetry in optimization.}
Symmetry in neural network parameter spaces induces variations in gradients and learning dynamics along different points in the same orbit \citep{van2017l2, tanaka2021noether}. 
This phenomenon has been exploited to accelerate training by searching for more favorable points in the orbit, for instance by minimizing parameter norms \citep{stock2019equi, saul2023weight}, maximizing gradient magnitude \citep{zhao2022symmetry, zhao2023improving}, or via randomly sampling \citep{armenta2023neural}. 
An alternative line of work makes optimization itself invariant to parameter symmetries, ensuring consistent convergence across orbit members. 
Examples include rescaling-invariant methods that exploit path products \citep{neyshabur2015path-sgd, meng2019mathcal} or constrain weights on a manifold \citep{badrinarayanan2015symmetry, huang2020projection, yi2022accelerating}, as well as broader geometric approaches such as natural gradient descent \citep{amari1998natural, song2018accelerating, kristiadi2023geometry}. 
Motivated by the success of incorporating symmetry in optimization, we explore the integration of symmetry into learning-to-optimize frameworks.

\section{Background}
\label{section:background}

In this section, we provide background on leveraging parameter space symmetry in optimization and learning to optimize.

\subsection{Teleportation and Symmetry}
Teleportation \citep{armenta2023neural, zhao2022symmetry} is an optimization strategy that aims to enhance first-order methods by exploiting symmetry of the optimizee $f$.

Suppose we have some optimizee $f:\mathbb{R}^{n} \to \mathbb{R}$ we would like to minimize over $\mathbb{R}^{n}$.
When we assume that $f$ is  $L$-smooth for some  $L<\infty$ and update using gradient descent with step size  $\alpha\in (0,\frac{1}{L}]$, the descent lemma holds \citep{garrigos2023handbook}: \[
	f(\mathbf{x}_{t+1}) \leq f(\mathbf{x}_t) - \frac{\alpha}{2}\| \nabla f(\mathbf{x}_t) \|_2^2
.\]
This fact suggests that replacing $\mathbf{x}_t$ with
\[ 
\mathbf{x}_t '\in \argmax_{\mathbf{x}\in \R^n,  f(\mathbf{x}) \leq f(\mathbf{x_t})} \|\nabla f(\mathbf{x}) \|
\]
will accelerate optimization for at least one subsequent step.

Teleportation leverages the above as follows.
On step \( t \in K \), we select a transformation \( g: \mathbb{R}^{n}\to\mathbb{R}^{n} \) such that \( f(g(\mathbf{x}_t)) \leq f(\mathbf{x}_t) \) and the gradient norm \(\|\nabla f(g(\mathbf{x}_t))\|\) is maximized.
We then set $\mathbf{x}_t' = g(\mathbf{x}_t)$ and proceed with our optimization algorithm of choice.

We often consider the case where the set of teleportation parameters $G$ is a group of symmetries under which the objective $f$ is invariant.
That is, \[
f(g(\mathbf{x})) = f(\mathbf{x}) \, \quad \forall g\in G \, \quad \forall \mathbf{x}\in \mathbb{R}^{n}
.\]
In this case, finding the teleportation parameter amounts to solving the problem \[
g \in  \argmax_{g\in G} \|\nabla f(g(\mathbf{x}))\|
.\] 

\subsection{Learning to Optimize}

The goal of learning to optimize (L2O) is to obtain an algorithm that can efficiently minimize some function $f$ starting at the point $x_0$, with $(f,x_0)$ being drawn from some underlying distribution of tasks $\mathcal{T}$.
To do so, we learn an L2O optimizer $m_\phi : \mathcal{Z} \to \mathbb{R}^n$
and optimize by using the update rule
		$$\mathbf{x}_{t+1} \gets \mathbf{x}_t + m_\phi(\mathbf{z}_t),$$
where $\mathbf{z}_t$ is in the input feature space $\mathcal{Z}$, and \(\phi\) are trainable parameters in $m$.
The L2O optimizer $m$ is usually a neural network (in particular, an LSTM) parametrized by learnable weights $\phi$,
and the input feature $\mathbf{z}_t$ may include information about gradients and iterates.

To learn $\phi$, we need a training loss. A commonly used loss is
$$\mathcal{L}(m) = \sum_{t=1}^T w_t f(\mathbf{x}_t)$$
for some re-weighting coefficients $w_1,\dots,w_T$.
We train according to the following algorithm.

\begin{algorithm}
\caption{L2O Training \cite{l2o}}\label{alg:cap}
\begin{algorithmic}
\Require Number of training runs $R$, training run epochs $N$, unroll frequency $T$, untrained optimizer $m$ parametrized by $\phi$, distribution of tasks $\mathcal{T}$, weights $w_t$
\For{$r = 1,\dots,R$}
    \State Sample optimizee and initialization $(f,x_0) \sim \mathcal{T}$
    \State $\mathcal{L} \gets 0$

    \For{$t=1,\dots,N$}
    \State $\mathbf{z}_{t} \gets \nabla f(\mathbf{x}_{t-1})$
    \State $\mathbf{x}_t \gets \mathbf{x}_{t-1} + m_\phi(\mathbf{z}_{t})$
	\State $\mathcal{L} \gets \mathcal{L} + w_t f(\mathbf{x}_t)$

	\If{$t \,\%\, T == 0$}
	    \State Perform a gradient-based update on \(\phi\) using \(\nabla_\phi \mathcal{L}\).
	    \State $\mathcal{L} \gets 0$
	\EndIf

    \EndFor

\EndFor
\end{algorithmic}
\end{algorithm}

Now, there are various reasons one might use an L2O optimizer.
First, L2O optimizers can learn strategies beyond vanilla gradient descent and in some cases outperform classical optimizers on certain task distributions.
\cite{goog} observed that trained L2O optimizers implicitly develop behaviors analogous to momentum, gradient clipping, and adaptive learning rate schedules. 
Perhaps more ambitiously, since neural networks are universal approximators in theory, L2O methods have the potential to discover optimization strategies that surpass traditional handcrafted algorithms \cite{l2o}.
In particular, L2O optimizers may be able to capture and perform updates based on second-order information.
While strong empirical evidence supporting this idea has yet to emerge, there is promise if we provide sufficient structure to the model, as in \cite{lodo}.

In this paper, we aim to demonstrate that an L2O optimizer augmented with symmetry-based teleportation \citep{zhao2023improving} can learn more effective optimization steps, potentially enabling it to outperform classical methods by adaptively navigating optimization landscapes.
We consider a modified version of Algorithm \ref{alg:cap} that additionally learns to teleport. 
Specifically, we augment the update to incorporate a learned group element $g$: $\mathbf{x}_t \gets g \cdot (\mathbf{x}_{t-1} + m_\phi(\mathbf{z}_{t}))$.
This allows the optimizer not only to perform local gradient-informed updates but also to leverage symmetry transformations.


\section{Theoretical Intuitions}
We provide theoretical intuitions suggesting that learning symmetry transformation in addition to learning local update rules improves L2O in convex settings.
First, we show that Newton's method has a local effect of increasing gradients, similarly to what teleportation does. This means we gain benefits from teleportation even if we do not teleport to the optimal position.
Second, we analyze an example where the L2O optimizer can provably learn the correct teleportation transformation during training.

\subsection{Increase of Gradient Norm in Second-Order Methods}
To provide additional intuition why teleportation speeds up optimization in L2O, we show that it behaves similarly to Newton's method locally. This means that in learning to optimize, additional learning a symmetry transformation can be locally helpful, even if the algorithm does not learn the perfect teleportation.

Consider a twice differentiable convex function $\loss(\vw)$ that has an invertible Hessian. Let $\vv_1 = -\grad \loss$ be the gradient descent update direction, and $\vv_2 = -H^{-1} \grad \loss$ be the update direction in Newton's method. The projection of $\vv_2$ on $\vv_1$ is  given by 
$    \vv_{\parallel} 
    = \frac{\vv_2 \cdot \vv_1}{\|\vv_1\|^2} \vv_1 ,
$
and the component of $\vv_2$ that is orthogonal to $\vv_1$ is 
$
    \vv_{\bot} 
    = \vv_2 - \vv_{\parallel} .
$
%

\begin{wrapfigure}{R}{0.4\columnwidth}
    \begin{center}
        \vskip -20pt
        \includegraphics[width=0.4\columnwidth]{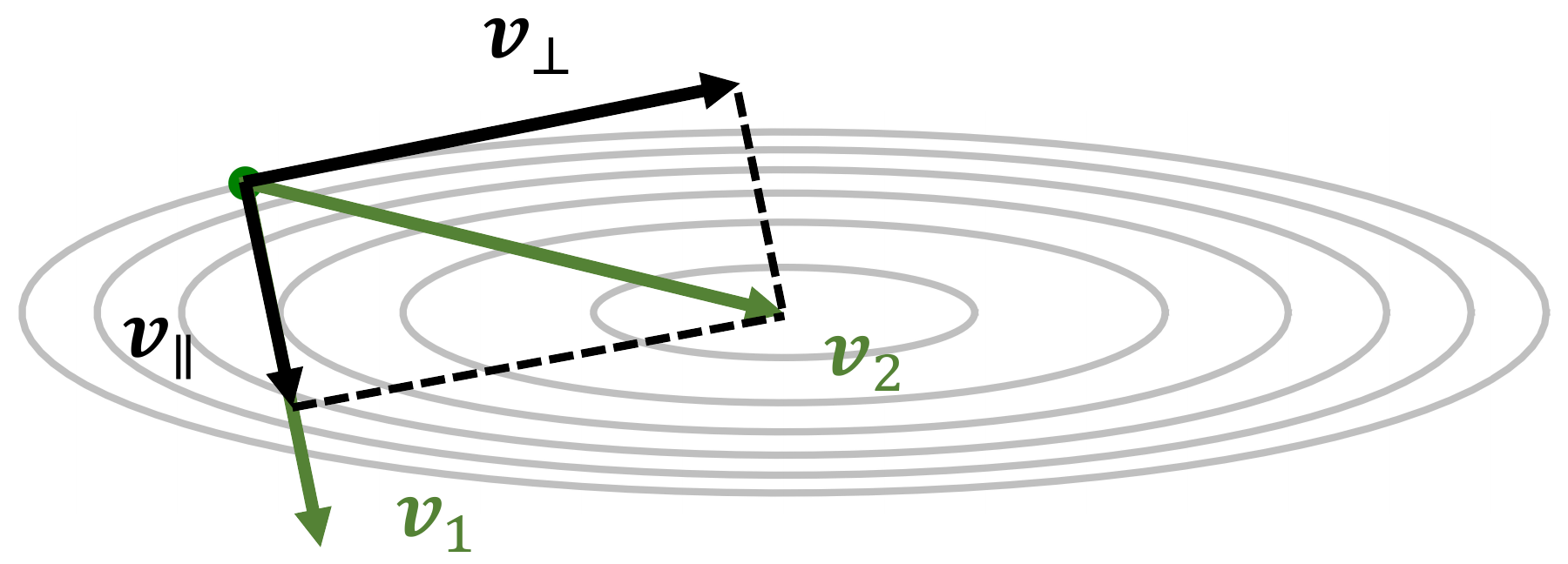}
        \caption{The gradient norm increases along $\vv_{\bot}$, the component of Newton's direction ($\vv_2$) that is orthogonal to the gradient ($\vv_1$).}
        \vskip -20pt
    \label{fig:components}
    \end{center}
\end{wrapfigure}

Teleportation moves parameters in the symmetry direction that increases $\|\grad \loss\|$. The symmetry component of Newton's method has a similar role.
The following proposition states that the gradient norm increases along the symmetry  component for convex functions.

\begin{proposition}
\label{prop:positive-derivative-convex}
For convex function $\loss$, the directional derivative of $\left\| \frac{\partial \loss}{\partial \vw} \right\|_2^2$ along the direction of $\vv_{\bot}$ is non-negative. That is,
\begin{align}
    \vv_{\bot} \cdot \frac{\partial}{\partial \vw} \left\| \frac{\partial \loss}{\partial \vw} \right\|_2^2 \geq 0.
\label{eq:positive-derivative-convex}
\end{align}
\end{proposition}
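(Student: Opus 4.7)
The plan is to compute the directional derivative explicitly, substitute the definition of $\vv_{\bot}$, and reduce the resulting inequality to a Cauchy--Schwarz bound in the inner product induced by the Hessian.

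First, I would apply the chain rule to obtain
\begin{equation*}
    \frac{\partial}{\partial \vw} \left\| \frac{\partial \loss}{\partial \vw} \right\|_2^2 \;=\; 2 H \grad \loss,
\end{equation*}
where $H$ denotes the Hessian of $\loss$. Writing $g = \grad \loss$ for readability, the target inequality (\ref{eq:positive-derivative-convex}) is then equivalent to showing $\vv_{\bot} \cdot (H g) \geq 0$.

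Next, I would substitute the definitions $\vv_2 = -H^{-1} g$, $\vv_1 = -g$, and therefore
\begin{equation*}
    \vv_{\parallel} \;=\; \frac{\vv_2 \cdot \vv_1}{\|\vv_1\|^2}\, \vv_1 \;=\; -\,\frac{g^\top H^{-1} g}{\|g\|^2}\, g,
\end{equation*}
and expand $\vv_{\bot} \cdot (H g) = (\vv_2 - \vv_{\parallel}) \cdot (Hg)$. Using $g^\top H^{-1} H g = \|g\|^2$ and collecting terms, this becomes
\begin{equation*}
    \vv_{\bot} \cdot (H g) \;=\; -\|g\|^2 \;+\; \frac{(g^\top H^{-1} g)(g^\top H g)}{\|g\|^2},
\end{equation*}
so (after multiplying by the positive quantity $\|g\|^2$) the claim reduces to the scalar inequality $(g^\top H^{-1} g)(g^\top H g) \geq \|g\|^4$.

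The final step is to recognize this as Cauchy--Schwarz in disguise. Since $\loss$ is convex with invertible Hessian, $H$ is positive definite, so its principal square root $H^{1/2}$ (and its inverse) exists. Taking $\va = H^{1/2} g$ and $\vb = H^{-1/2} g$, one checks $\va \cdot \vb = g^\top g$, $\|\va\|^2 = g^\top H g$, and $\|\vb\|^2 = g^\top H^{-1} g$, so Cauchy--Schwarz $(\va \cdot \vb)^2 \leq \|\va\|^2 \|\vb\|^2$ yields exactly the required bound, with equality iff $g$ is an eigenvector of $H$.

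The main obstacle is spotting the factorization $H = H^{1/2} \cdot H^{1/2}$ that makes Cauchy--Schwarz apply; once this factorization is in place, the rest is routine algebra. A minor subtlety worth flagging is the role of convexity: it is needed precisely to guarantee $H \succ 0$ (rather than merely invertible), so that $H^{1/2}$ and $H^{-1/2}$ are real and symmetric and the inner-product interpretation is legitimate.
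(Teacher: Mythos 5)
Your proof is correct, and the reduction you perform — computing $\frac{\partial}{\partial \vw}\left\|\frac{\partial \loss}{\partial \vw}\right\|_2^2 = 2H\grad\loss$, substituting the definitions of $\vv_2$, $\vv_\parallel$, $\vv_\bot$, and arriving at the scalar inequality $(\grad\loss^\top H^{-1}\grad\loss)(\grad\loss^\top H\grad\loss) \geq \|\grad\loss\|^4$ — is exactly the reduction in the paper's proof. Where you diverge is in how you establish that scalar inequality. You apply Cauchy--Schwarz to $\va = H^{1/2}\grad\loss$ and $\vb = H^{-1/2}\grad\loss$, using the positive-definite square root; the paper instead proves a standalone Lemma (Lemma \ref{lemma:quadratic-form-ineq-general}) stating $(\vw^T A^\alpha \vw)^2 \leq (\vw^T A^{\alpha+\beta}\vw)(\vw^T A^{\alpha-\beta}\vw)$ for positive definite $A$ and integers $\alpha,\beta$, proved by eigendecomposition and the AM--GM inequality applied termwise to $(d_i d_j^{-1})^\beta + (d_i^{-1} d_j)^\beta \geq 2$, and then specializes to $\alpha=0$, $\beta=1$. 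Your route is shorter and leans on a standard inequality, and it transparently gives the equality condition (equality iff $\grad\loss$ is an eigenvector of $H$); the paper's lemma is more general (arbitrary integer exponents) and its explicit eigenvalue computation makes visible exactly where positive definiteness enters, which the paper then exploits to give a counterexample showing the statement can fail for non-convex $\loss$. Your remark that convexity plus the assumed invertibility of $H$ is what licenses $H \succ 0$ (and hence the real symmetric square roots) is exactly the right subtlety to flag.
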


Figure \ref{fig:components} visualizes the vectors used in the Proposition \ref{prop:positive-derivative-convex} for a convex quadratic function with two parameters. 
While any vector can be decomposed into a gradient component $\vv_\parallel$ and a symmetry component $\vv_\bot$, the gradient norm does not necessarily increase along $\vv_\bot$. 
Therefore, Proposition \ref{prop:positive-derivative-convex} implies that the symmetry component of Newton's method resembles teleportation.

This result also suggests that small teleportation brings the subsequent first-order updates closer to second-order updates, at least for convex functions. This advances the results in \cite{zhao2022symmetry}, which only says that teleportation brings subsequent first-order updates equal to second-order updates at critical points of $\left\| \frac{\partial \loss}{\partial \vw} \right\|_2^2$.
See proofs and further discussions in Appendix \ref{appendix:positive-derivative-convex}.

\subsection{Learnability of Symmetry Transformation}
\label{sec:theta_update}

We next analyze an example where a modified L2O optimizer can learn correct teleportation strategies during training. 
Consider an objective function $f: \mathbb{R}^n \to \mathbb{R}$ that maps parameters to a loss value. 
Let $g: \theta, x \mapsto x$, with $x$ denoting parameters, be a teleportation operation parameterized by $\theta$.
Let $x_0$ and $\theta_0$ be initial parameter and group element, and $\alpha, \beta \in \R$ be learning rates for parameters and group elements. 
We consider Algorithm \ref{alg:lttotf}, a simplified L2O algorithm that learns a single rotation, $\theta$, used in teleportation.

\begin{algorithm}
\caption{Learn to Teleport on the Fly}\label{alg:lttotf}
\begin{algorithmic}
\State {\bfseries Input:} Objective function $f$, teleporting oracle $g$ parameterized by $\theta$, initial $x_0$ and $\theta_0$, learning rates $\alpha$ and $\beta$.
\For{$t = 0, 1, 2, ...$}
    \State $y_t \xleftarrow{} g_{\theta_t}(x_t)$
    \State $x_{t+1} \xleftarrow{} y_t - \alpha \nabla_{y_t} f(y_t)$
    \State $\theta_{t+1} \xleftarrow{} \theta_t - \beta \nabla_{\theta_t} f(x_{t+1})$
\EndFor
\end{algorithmic}
\end{algorithm}

We analyze the case where $f$ is given by $f(x) = x^T A x$ for some positive definite matrix $A \in \mathbb{R}^{2 \times 2}$, i.e., $f$ is a convex quadratic function with a minimum at $(0,0)$. Since $A$ is positive definite, there exists a unique positive definite matrix $B \in \mathbb{R}^{2 \times 2}$ such that $A = B^T B$.
Thus, we can rewrite $f(x) = x^T A x = \| B x \|^2$.

\begin{wrapfigure}{R}{0.4\textwidth}
\vskip -15pt
\centering
\includegraphics[width=0.4\textwidth]{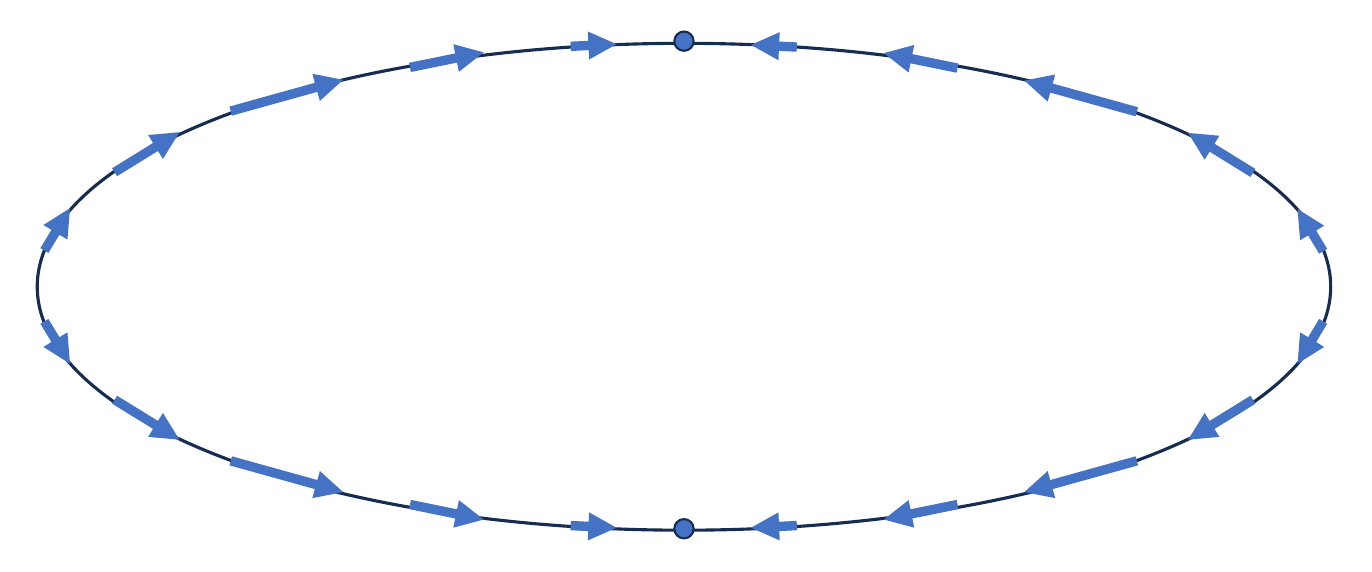}
\caption{Update direction of $\theta$ during learning to teleport.}
\label{fig:theta_update}
\vskip -10pt
\end{wrapfigure}

We parametrize the teleportation operation by a single scalar $\theta \in \mathbb{R}$, which represents a rotation angle.
Let the teleportation transformation be $g_{\theta}(x) = B^{-1} R_\theta B x$.
where $R_\theta$ is the rotation matrix that represents a 2D rotation by angle $\theta$.
It is clear that this transformation preserves the function, i.e. $f(x) = f(g_\theta(x))$ for all $\theta$.

In Appendix \ref{appendix:theta_update}, we derive a closed-form update for \(\theta\). This update consistently rotates parameters toward alignment with the shorter axis of the ellipse formed by the level sets of \(f\), where the gradient is larger (Figure \ref{fig:theta_update}).
Additionally, $\theta$ update fastest when $x$ is halfway between the axes. 
These results suggest that teleportation is a learnable enhancement to L2O, improving optimization by incorporating second-order-like behavior while maintaining computational efficiency.

\vspace{10pt}
\section{Experiments: Learning to Optimize with Teleportation}
\subsection{Test Functions}
While theoretical and empirical evidence suggests that exploiting parameter space symmetry can improve L2O, a systematic evaluation has been lacking. To address this, we introduce a set of low-dimensional functions $f:\mathbb{R}^{2}\to \mathbb{R}$ for an easy teleportation implementation,
faster training for the L2O optimizer,
and the ability to plot and visualize the optimization trajectories\footnote{The code used for our experiments is available at \url{https://github.com/Rose-STL-Lab/L2O-Symmetry}.}. 

\subsubsection{Task distributions}
Consider the family of functions \[
	\mathcal{F} = \Big\{f_h(x,y) = \|h(x,y)\|_2^2 = h(x,y)_1^2 + h(x,y)_2^2 \,\big| \, h:\mathbb{R}^2\to\mathbb{R}^2 \text{ bijective}\Big\} 
.\] 
Each $f_h \in \mathcal{F}$ has $SO(2)$ symmetry.
That is, for any rotation $R_\theta$, we have that \[f_h(x,y) = f_h\Big(h^{-1}\big(R_\theta(h(x,y))\big)\Big).\]

\paragraph{Generalized Booth Functions}

In the first task, we generate a family of ellipses by sampling nonsingular matrix $A\in \mathbb{R}^{2\times 2}$ and vector $b \in \mathbb{R}^2$,
then setting 
\[h(x,y) = A\begin{bmatrix} x \\ y \end{bmatrix} + b.\]
Note that by setting  $A = \begin{bmatrix} 1 & 2 \\ 2 & 1 \end{bmatrix}$ and $b = \begin{bmatrix} -7 \\ -5 \end{bmatrix}$,  $f_h$ becomes the Booth function.
Furthermore, any  $f_h$ generated by this method will be convex, since a convex function composed with an affine function is still convex.

In Figure \ref{fig:ef}, we fix $f_h$ with $h$ defined by taking  $A=\begin{bmatrix} 0.5 & 0 \\ 0 & 3 \end{bmatrix}$ and $b = \begin{bmatrix} 0 \\ 0 \end{bmatrix}$, 
and sample $x_0$ by taking each coordinate IID from a standard normal.
In Figure \ref{fig:ev}, we sample $f_h$ with $h$ defined by $A$ and $b$, where each entry of  $A$ and  $b$ are taken IID from a standard normal. 
We sample  $x_0$ as above.

\paragraph{Generalized Rosenbrock Functions}

In the second task, we generate a family of nonconvex, Rosenbrock-like functions 
\[h(x,y) = \big( c(x) y + d(x), ax + b\big),\]
where $c, d: \R \to \R$, and $a,b\in \mathbb{R}$.
This map is invertible, with inverse 
\[h^{-1}(x,y) = \left(\frac{y-b}{a}, \frac{x-d\left( \frac{y-b}{a} \right) }{c\left( \frac{y-b}{a} \right) }\right)
.\]
Notably, we can recover the Rosenbrock function by taking $f_h$ with $c(x) = -10$,  $d(x) = 10x^2$, $a=1$, and  $b=-1$.
Hence, we can consider this family of functions to be a generalization of the Rosenbrock function.
The functions generated by this method will generally be nonconvex.

In Figure \ref{fig:rf}, we fix $f_h$ with $h$ defined by taking $a=1, b=-1, c(x)=-2,$ and  $d(x)=0.4x^2$.
We sample $x_0$ as above.
In Figure \ref{fig:rv}, we sample $f_h$ with  $h$ defined by $a, b, c(x)=c_1,$ and  $d(x) = d_1x^2 + d_2x + d_3$, and $a,b,c_1,d_1,d_2,d_3$ taken IID from a standard normal.
We sample $x_0$ as above.

\subsubsection{Results}
Although teleportation shows promise in some visualizations, vanilla L2O consistently outperforms the teleportation-augmented variant across both function families (Figures \ref{fig:ef}–\ref{fig:rv}).
One explanation is that teleportation—while beneficial in some settings—may be harmful in others. A learned teleport strategy that improves convergence on one distribution might misalign with another. This aligns with earlier observations that maximizing gradient norm via symmetry may sometimes hinder optimization rather than help.
Thus, these mixed results reflect not just a limitation of L2O, but a broader insight: learning good teleportation strategies is itself a hard problem.

\begin{figure}[H]
    \centering
    \begin{minipage}{0.5\textwidth}
        \centering
        \includegraphics[width=\textwidth]{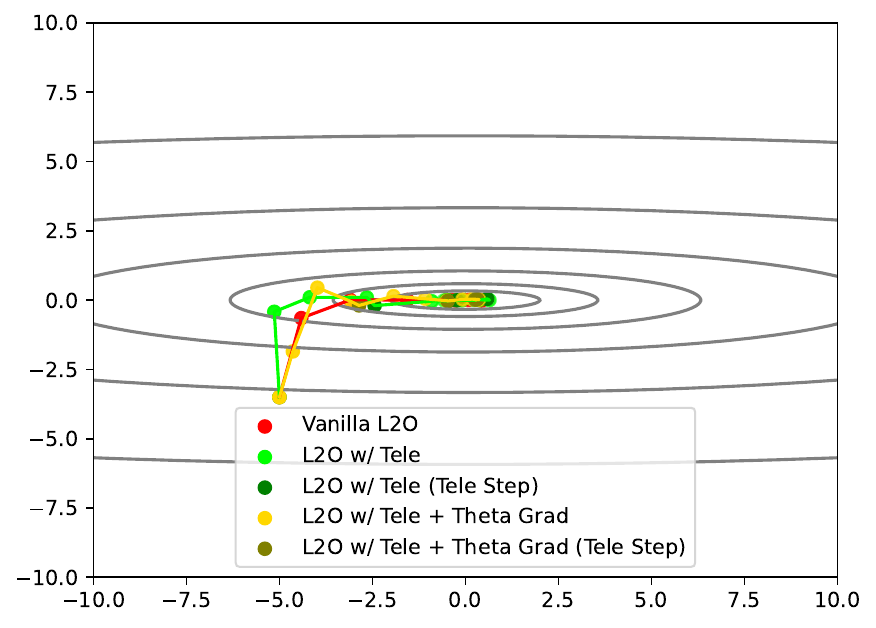}
    \end{minipage}\hfill
    \begin{minipage}{0.5\textwidth}
        \centering
        \includegraphics[width=\textwidth]{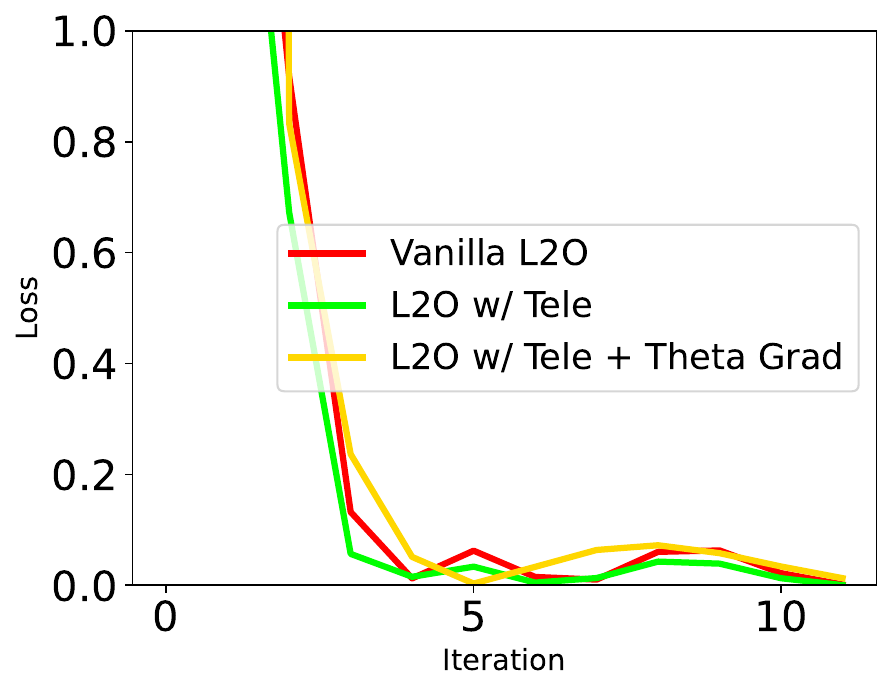}
    \end{minipage}
\caption{Comparison of vanilla L2O with and without learned teleportation for fixed objective ellipse functions.}
\label{fig:ef}
\end{figure}

\begin{figure}[H]
    \centering
    \begin{minipage}{0.5\textwidth}
        \centering
        \includegraphics[width=\textwidth]{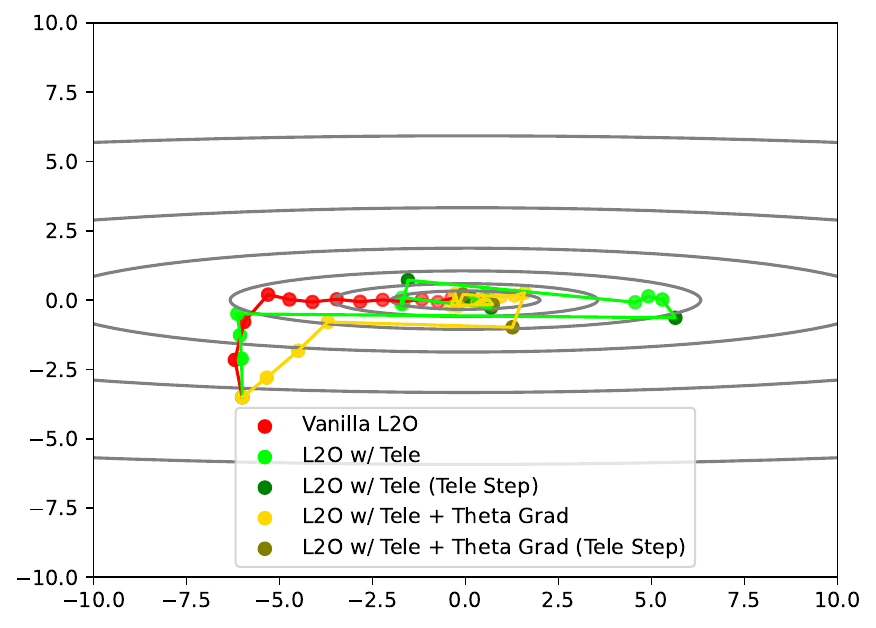}
    \end{minipage}\hfill
    \begin{minipage}{0.5\textwidth}
        \centering
        \includegraphics[width=\textwidth]{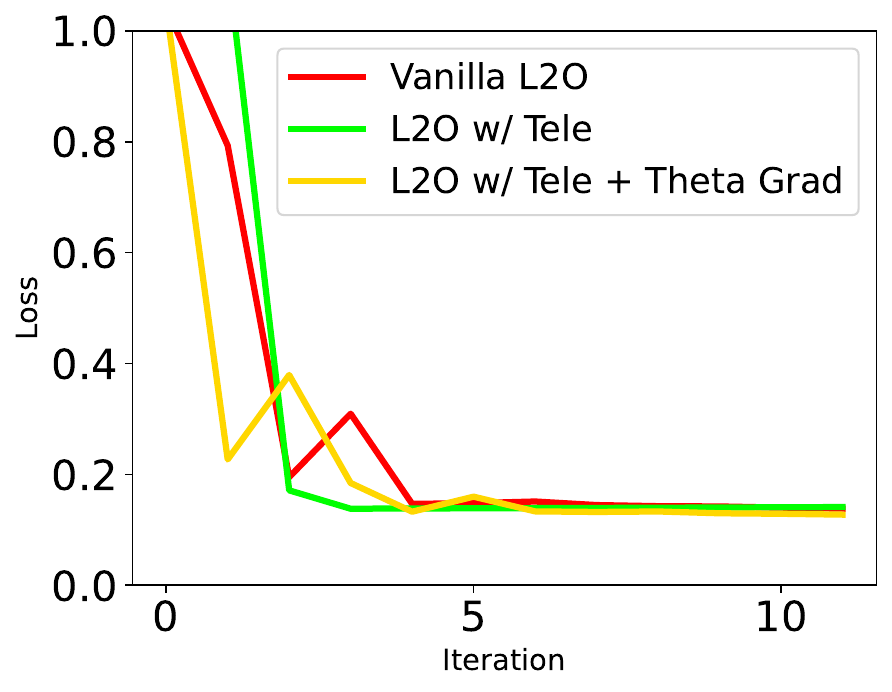}
    \end{minipage}
\caption{Comparison of vanilla L2O with and without learned teleportation for variable objective ellipse functions.}
\label{fig:ev}
\end{figure}

\begin{figure}[H]
    \centering
    \begin{minipage}{0.5\textwidth}
        \centering
        \includegraphics[width=\textwidth]{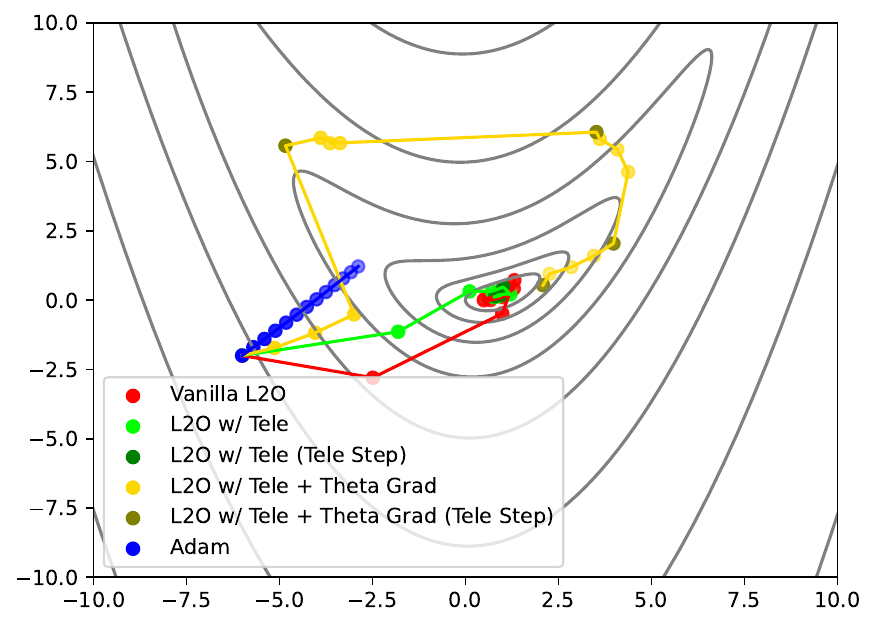}
    \end{minipage}\hfill
    \begin{minipage}{0.5\textwidth}
        \centering
        \includegraphics[width=\textwidth]{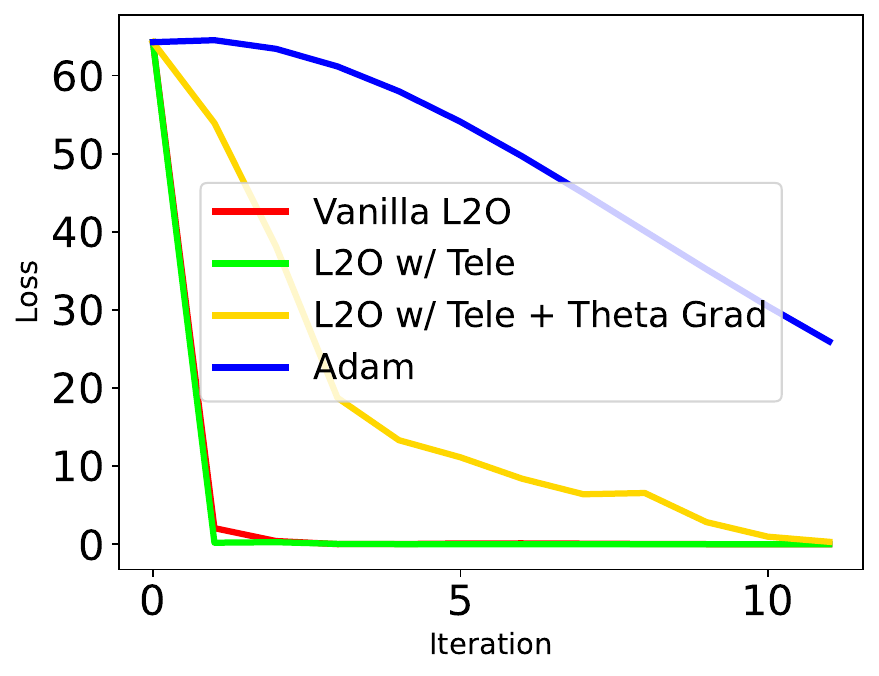}
    \end{minipage}
\caption{Comparison of vanilla L2O with and without learned teleportation for fixed objective Rosenbrock functions.}
\label{fig:rf}
\end{figure}

\begin{figure}[H]
    \centering
    \begin{minipage}{0.5\textwidth}
        \centering
        \includegraphics[width=\textwidth]{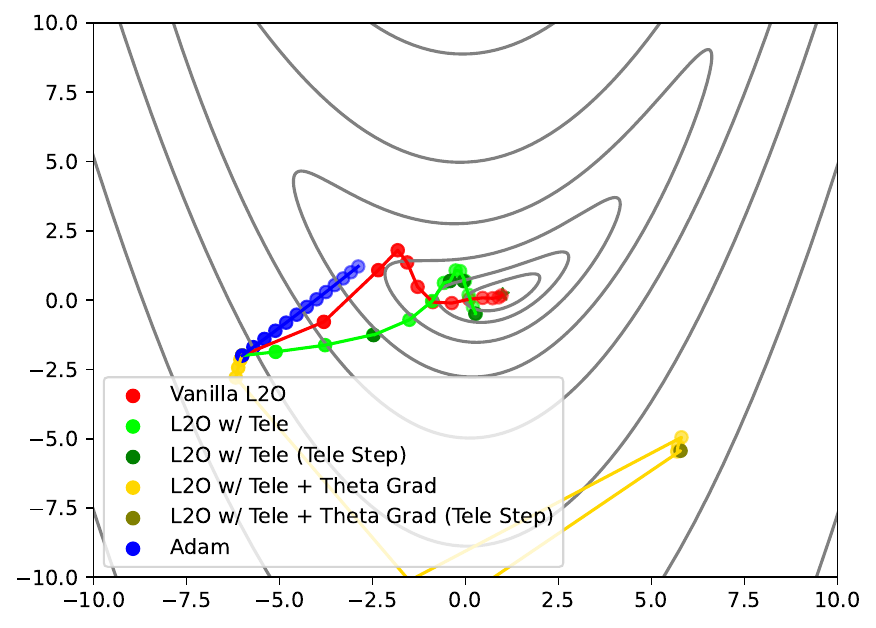}
    \end{minipage}\hfill
    \begin{minipage}{0.5\textwidth}
        \centering
        \includegraphics[width=\textwidth]{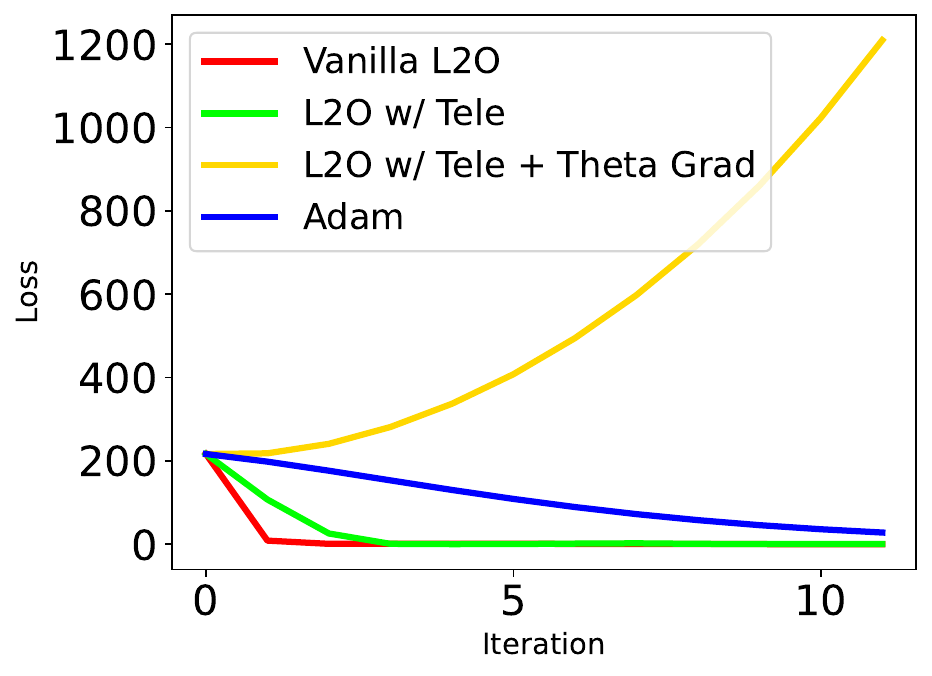}
    \end{minipage}
\caption{Comparison of vanilla L2O with and without learned teleportation for variable objective Rosenbrock functions.}
\label{fig:rv}
\end{figure}


\subsection{Augmenting learning to teleport with momentum}

In this section, we aim to improve on the learning to teleport algorithm proposed in \cite{zhao2023improving}. 
In particular, we augment the algorithm by additional learning a momentum term (Algorithm \ref{alg:momentum}).
We train and test on two-layer LSTMs with hidden dimension 300.  We train the meta-optimizers on multiple trajectories each at an increment of 100 step. The learning rate for meta-optimizers are $10^{-4}$ for $m_1$ and $10^{-3}$ for $m_2$. LSTM learns momentum by passing input as current gradient and outputs momentum coefficient. The velocity is updated with the momentum coefficient.
We evaluate the meta-optimizers on 5 unseen trajectories that were not used during training.

\begin{algorithm}
\caption{Learning to Teleport with LSTM-based Momentum}\label{alg:momentum}
\begin{algorithmic}
\State {\bfseries Input:} L2O optimizer $m_1,m_2$ with initial parameters $\phi_1, \phi_2$, Optimizer $f$, teleportation  $Z$, number of epochs $epoch$.
\For{$t = 1,\dots,epoch$}
    \State $f_{t}, h_{1t} \gets m_1(\nabla_t, h_{1t-1}, \phi_1)$ 
    \State $\beta_t, g_t, h_{2_{t-1}} = m_2(\nabla_t, h_{2_t-1}, \phi_2)$\Comment{LSTM-based Momentum}
    \State $\mathbf{v}_{t} \gets \beta_t \mathbf{v}_{t-1} - \alpha \nabla_t$ \Comment{Momentum update}
     \State ${W}_t \gets {W}_{t-1} + \mathbf{v}_t$ \Comment{Weight update}
    \State $\mathbf{x}_t \gets g_t(\mathbf{x}_{t-1} + \mathbf{v}_{t})$
    \If{ $t \in  Z$}
        \State $\mathbf{x}_t \gets g(\mathbf{x}_{t-1} +\beta _t v_t - \alpha \nabla_t)$
    \EndIf
\EndFor
\end{algorithmic}
\end{algorithm}

\begin{wrapfigure}{R}{0.4\textwidth}
\vskip -12pt
\centering
\includegraphics[width=0.35\textwidth]{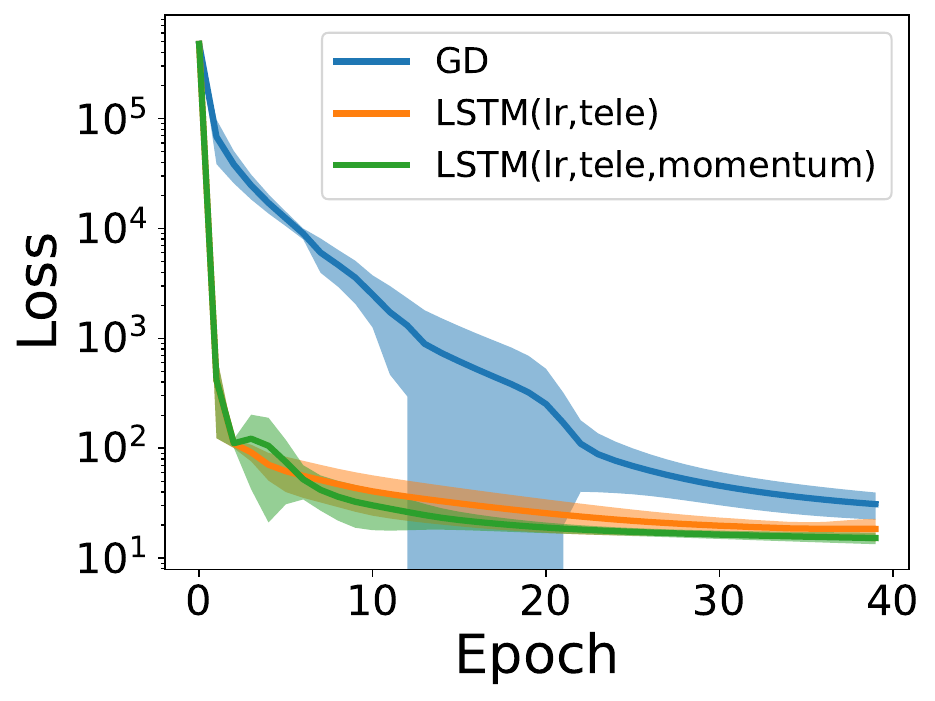}
\caption{Augmenting learning to teleport with learned momentum.}
\label{fig:learningmomentum}
\vskip -20pt
\end{wrapfigure}

We compare with two baselines, vanilla gradient descent (“GD”), and a meta optimizer that learns the group element $g_t$ and the learning rate used to perform gradient descent (“LSTM(lr,tele)”). 
In comparison, our proposed method (“LSTM(lr,tele,momentum)”) learns the momentum coefficient $\beta_t$ and the learning rate used to perform gradient descent.
We continue training until adding more trajectories no longer improves the convergence rate. 
We observe that learning momentum on top of local and symmetry updates improves the meta-learning optimizer (Figure \ref{fig:learningmomentum}), suggesting there is more room for improvement of learning to teleport.


\section{Discussion}

\paragraph{Connections to Reinforcement Learning}



The L2O framework can be naturally framed as a reinforcement learning (RL) problem: the state encodes the iterate and gradient, the action is the update step, the cost is the objective value, and the goal is to learn a policy that minimizes cost over its trajectory. Prior work has explored this RL perspective (e.g., \cite{li2016learning, li2017learning}).
This perspective suggests several avenues for improvement. Future work could refine the cost function, e.g., incorporating distance to the optimum to mitigate harmful teleportation. Additionally, alternative policy architectures or learning algorithms from RL could enhance performance.

\paragraph{Limitations}



Incorporating level set teleportation does not guarantee faster convergence, even in convex settings \citep{mishkin2023level}. Teleporting in the direction of gradient increase can sometimes move iterates further from the optimum or disrupt learned optimization dynamics, leading to overshooting. 
If teleportation is frequently detrimental within a task distribution, the learned optimizer will perform no better than vanilla L2O. However, when the task distribution is well-behaved, teleportation can accelerate convergence. Encouragingly, our experiments suggest that neural network training may fall into this favorable category.


\subsubsection*{Acknowledgments}
This work was supported in part by the U.S. Army Research Office under Army-ECASE award W911NF-07-R-0003-03, the U.S. Department Of Energy, Office of Science, IARPA HAYSTAC Program, and NSF Grants \#2205093, \#2146343, \#2134274, CDC-RFA-FT-23-0069, DARPA AIE FoundSci and DARPA YFA.

\bibliography{iclr2025_conference}
\bibliographystyle{iclr2025_conference}

\appendix
\section*{Appendix}

\section{Proof of Proposition \ref{prop:positive-derivative-convex}}
\label{appendix:positive-derivative-convex}

\begin{lemma}
\label{lemma:quadratic-form-ineq-general}
For vector $\vw \in \mathbb{R}^n$, positive definite matrix $A \in \mathbb{R}^{n \times n}$, and $\alpha, \beta \in \mathbb{Z}$,
\begin{align}
    (\vw^T A^\alpha \vw)^2 \leq (\vw^T A^{\alpha + \beta} \vw) (\vw^T A^{\alpha - \beta} \vw).
\label{eq:quadratic-form-ineq-general}
\end{align}
\end{lemma}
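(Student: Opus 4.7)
The plan is to diagonalize $A$ and reduce the inequality to a statement about weighted sums, after which it becomes a direct application of the Cauchy--Schwarz inequality.

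First, since $A$ is positive definite, the spectral theorem lets me write $A = Q \Lambda Q^T$ with $Q$ orthogonal and $\Lambda = \mathrm{diag}(\lambda_1, \ldots, \lambda_n)$, $\lambda_i > 0$. Positivity of the eigenvalues means $A^k$ is unambiguously defined for every integer $k$ (positive, zero, or negative), with $A^k = Q \Lambda^k Q^T$. Setting $\vu = Q^T \vw$, each of the three quadratic forms reduces to a weighted power sum:
$$\vw^T A^k \vw = \sum_{i=1}^n u_i^2 \lambda_i^k.$$

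Next, I would split the exponent as $\alpha = \tfrac{1}{2}(\alpha+\beta) + \tfrac{1}{2}(\alpha-\beta)$ and factor each summand accordingly:
$$\vw^T A^\alpha \vw \;=\; \sum_{i=1}^n \bigl(u_i \lambda_i^{(\alpha+\beta)/2}\bigr)\bigl(u_i \lambda_i^{(\alpha-\beta)/2}\bigr).$$
Cauchy--Schwarz applied to the vectors in $\mathbb{R}^n$ with components $u_i \lambda_i^{(\alpha+\beta)/2}$ and $u_i \lambda_i^{(\alpha-\beta)/2}$ then yields the square of this sum bounded by $(\sum_i u_i^2 \lambda_i^{\alpha+\beta})(\sum_i u_i^2 \lambda_i^{\alpha-\beta})$, which is exactly \eqref{eq:quadratic-form-ineq-general}.

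There is no real obstacle: the only subtlety is making sense of the half-integer powers $\lambda_i^{(\alpha\pm\beta)/2}$, which is immediate since each $\lambda_i > 0$ admits a unique positive real root. A more conceptual alternative, worth mentioning in the write-up, is that $\phi(k) := \log(\vw^T A^k \vw) = \log\!\sum_i u_i^2 e^{k \log \lambda_i}$ is convex in $k$ as the log of a sum of exponentials, and the claimed inequality is simply the midpoint-convexity statement $2\phi(\alpha) \leq \phi(\alpha+\beta) + \phi(\alpha-\beta)$. Either route produces a short, self-contained proof.
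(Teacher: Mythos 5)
Your proof is correct, and it reaches the same inequality by a slightly different packaging of the same underlying idea. The paper also diagonalizes $A$ first, but then it expands both sides of the inequality as double sums over index pairs $(i,j)$ and verifies the term-by-term bound $2 d_i^\alpha d_j^\alpha \leq d_i^{\alpha+\beta} d_j^{\alpha-\beta} + d_i^{\alpha-\beta} d_j^{\alpha+\beta}$ via AM--GM applied to $(d_i d_j^{-1})^\beta + (d_i^{-1} d_j)^\beta \geq 2$; in effect it writes out, from scratch, the elementary proof of exactly the Cauchy--Schwarz instance you invoke. Your route is shorter because you cite Cauchy--Schwarz as a black box applied to the vectors with components $u_i \lambda_i^{(\alpha+\beta)/2}$ and $u_i \lambda_i^{(\alpha-\beta)/2}$; the only extra care needed is the half-integer powers $\lambda_i^{(\alpha\pm\beta)/2}$, which are well defined since the eigenvalues are positive, whereas the paper's expansion stays entirely within integer powers. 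Note that positivity is genuinely essential here (and the paper makes this point with an explicit indefinite counterexample after the proof): with a negative eigenvalue the half-powers are no longer real and the AM--GM step fails, so your remark about needing $\lambda_i > 0$ is exactly the right caveat. Your log-convexity observation --- that $k \mapsto \log(\vw^T A^k \vw)$ is convex as a log-sum-of-exponentials, so the claim is midpoint convexity --- is a clean conceptual reformulation not present in the paper, and it would generalize immediately to real (not just integer) exponents.
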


\begin{proof}
Since $A$ is positive definite, there exists orthogonal matrix $P \in \R^{n \times n}$ and diagonal matrix $D \in \R^{n \times n}$ such that $A$ can be decomposed into $A = PDP^{-1}$. Then $A^{-1} = (PDP^{-1})^{-1} = PD^{-1}P^{-1}$, and $A^2 = PDP^{-1}PDP^{-1} = P D^2 P^{-1}$. Similarly $A^\gamma = P D^\gamma P^{-1}$ for all integer $\gamma$.

Let $\vx = P^T\vw \in \R^n$. Substitute $\vw = P \vx$ into the left and right hand side of \eqref{eq:quadratic-form-ineq-general} and let $d_i$ be the $i^{th}$ diagonal element in $D$, we have
\begin{align}
    (\vw^T A^\alpha \vw )^2
    &= \left(\vx^T P^T (P D^\alpha P^{-1}) P \vx\right)^2 \cr
    &= (\vx^T D^\alpha \vx)^2 \cr
    &= \left(\sum_{i=1}^n d_i^\alpha x_i^2\right) \left(\sum_{j=1}^n d_j^\alpha x_j^2\right) \cr
    &= \sum_{i \leq j} (2 d_i^\alpha d_j^\alpha) x_i^2 x_j^2
\label{eq:quadratic-form-ineq-general-lhs}
\end{align}
and 
\begin{align}
    (\vw^T A^{\alpha+\beta} \vw) (\vw^T A^{\alpha-\beta} \vw) 
    &= (\vx^T P^T (P D^{\alpha+\beta} P^{-1}) P \vx) 
    (\vx^T P^T (P D^{\alpha-\beta} P^{-1}) P \vx) \cr 
    &= (\vx^T D^{\alpha+\beta} \vx)(\vx^T D^{\alpha-\beta} \vx) \cr
    &= \left(\sum_{i=1}^n d_i^{\alpha+\beta} x_i^2 \right) \left(\sum_{j=1}^n d_j^{\alpha-\beta} x_j^2 \right) \cr
    &= \sum_{i \leq j} (d_i^{\alpha+\beta} d_j^{\alpha-\beta} + d_i^{\alpha-\beta} d_j^{\alpha+\beta}) x_i^2 x_j^2.
\label{eq:quadratic-form-ineq-general-rhs}
\end{align}

To prove \eqref{eq:quadratic-form-ineq-general-lhs} $\leq$ \eqref{eq:quadratic-form-ineq-general-rhs}, it suffices to show that $2 d_i^\alpha d_j^\alpha \leq d_i^{\alpha+\beta} d_j^{\alpha-\beta} + d_i^{\alpha-\beta} d_j^{\alpha+\beta}$ for all $(i, j)$. Since $d_i$ are the eigenvalues of $A$, all $d_i$'s are positive. By the inequality of arithmetic and geometric means,
\begin{align}
    (d_i d_j^{-1})^\beta + (d_i^{-1}d_j)^\beta \geq 2 \sqrt{(d_i d_j^{-1})^\beta (d_i^{-1}d_j)^\beta} = 2.
\end{align}
Therefore,
\begin{align}
    d_i^{\alpha+\beta} d_j^{\alpha-\beta} + d_i^{\alpha-\beta} d_j^{\alpha+\beta}
    = \left[ (d_i d_j^{-1})^\beta + (d_i^{-1}d_j)^\beta \right] (d_i d_j)^\alpha
    \geq 2 d_i^\alpha d_j^\alpha.
\end{align}
\end{proof}

\begin{manualproposition}{\ref{prop:positive-derivative-convex}}
For convex function $\loss$, the directional derivative of $\left\| \frac{\partial \loss}{\partial \vw} \right\|_2^2$ along the direction of $\vv_{\bot}$ is non-negative. That is,
\begin{align}
    \vv_{\bot} \cdot \frac{\partial}{\partial \vw} \left\| \frac{\partial \loss}{\partial \vw} \right\|_2^2 \geq 0.
\end{align}
\end{manualproposition}

\begin{proof}
Note that $\frac{\partial}{\partial \vw} \left\| \frac{\partial \loss}{\partial \vw} \right\|_2^2 = 2 H \grad \loss$. Then we have
\begin{align}
    \vv_{\bot} \cdot \frac{\partial}{\partial \vw} \left\| \frac{\partial \loss}{\partial \vw} \right\|_2^2
    &= 2 (H \grad \loss)^T \vv_{\bot} \cr
    &= - 2 \grad \loss^T H^T H^{-1} \grad \loss + 2 \frac{\grad \loss^T H^{-1} \grad \loss}{\grad \loss^T \grad \loss}\grad \loss^T H^T \grad \loss \cr 
    &= 2 \left( - \grad \loss^T \grad \loss + \frac{\grad \loss^T H^{-1} \grad \loss}{\grad \loss^T \grad \loss}\grad \loss^T H^T \grad \loss \right).
\end{align}
Setting $\alpha=0$ and $\beta=1$ in Lemma \ref{lemma:quadratic-form-ineq-general} and substitute $\grad\loss, H$ for $\vw, A$, we have
\begin{align}
    - \grad \loss^T \grad \loss + \frac{\grad \loss^T H^{-1} \grad \loss}{\grad \loss^T \grad \loss}\grad \loss^T H^T \grad \loss \geq 0.
\end{align}
Therefore,
\begin{align}
    \vv_{\bot} \cdot \frac{\partial}{\partial \vw} \left\| \frac{\partial \loss}{\partial \vw} \right\|_2^2
    = 2 \left( - \grad \loss^T \grad \loss + \frac{\grad \loss^T H^{-1} \grad \loss}{\grad \loss^T \grad \loss}\grad \loss^T H^T \grad \loss \right)
    \geq 0.
\end{align}
\end{proof}

Note that Lemma \ref{lemma:quadratic-form-ineq-general} does not hold for non-convex functions. For example,
\begin{align}
    \alpha = 0, \beta = 1, 
    \vw = 
    \begin{bmatrix}
        1 \\
        3
    \end{bmatrix}, 
    A = 
    \begin{bmatrix}
        1 & 0\\
        0 & -2
    \end{bmatrix}, 
    A^{-1} = 
    \begin{bmatrix}
        1 & 0\\
        0 & -\frac{1}{2}
    \end{bmatrix}. 
\end{align}
\begin{align}
    (\vw^T \vw)^2 = 100 > (-3.5) \times (-17) = (\vw^T A^{-1} \vw) (\vw^T A \vw).
\end{align}
Consequently, Proposition \ref{prop:positive-derivative-convex} may not hold for non-convex functions.

\section{Derivation omitted from Section \ref{sec:theta_update}.}
\label{appendix:theta_update}

We will derive the closed form version of the update rules. 
Recall that $y_t = g_{\theta_t}(x_t)$. Since $\nabla f(x) = 2Ax = 2 B^T B x$, we have
\begin{align*}
    x_{t+1} = y_t - \alpha \nabla_{y_t} f(y_t) 
    = B^{-1} R_\theta B x_t - \alpha 2BBB^{-1}R_{\theta}Bx
    = (B^{-1} - 2 \alpha B) R_\theta Bx
\end{align*}

To simplify notation, let $C = B^{-1} - \alpha 2B$. Then we have $x_{t+1} = CR_\theta Bx_t$.

We next compute $\nabla_{\theta_t} f(x_{t+1})$ using the chain rule:
\begin{align*}
    \nabla_{\theta_t} f(x_{t+1}) 
    &= \frac{\partial f}{\partial x_{t+1}} \frac{\partial x_{t+1}}{\partial \theta_t} \cr
    &= (2Ax_{t+1})^T (CR_{\theta_t + \frac{\pi}{2}} Bx_t) \cr
    &= (2ACR_\theta Bx_t)^T (CR_{\theta_t + \frac{\pi}{2}} Bx_t) \cr
    &= (2x_t^T B R_{-\theta} CA (CR_{\theta_t + \frac{\pi}{2}} Bx_t)
\end{align*}

Note that $CAC = (I-2\alpha A)^2$ is symmetric. Therefore, there exists an eigendecomposition $CAC = Q\Lambda Q$.
Additionally, $Q$ is the same orthogonal matrix as in the eigendecomposition of $A$, and $\Lambda = 
\begin{bmatrix}
    (1-2\alpha \lambda_1)^2 & 0 \\
    0 & (1-2\alpha \lambda_2)^2
\end{bmatrix}$.
Putting it together, we have
$\nabla_{\theta_t} f(x_{t+1}) 
    = 2(Bx_t)^T \Lambda (R_{\frac{\pi}{2}} Bx_t)$
Plugging this into our update rule, $\theta_{t+1} = \theta_t - 2 \beta (Bx_t)^T \Lambda (R_{\frac{\pi}{2}} Bx_t)$.

The rotation $\theta$ aligns with the principal axes of the level sets of $f(x)$. 
In other words, the update rule pushes the iterate in the direction of rotation towards the shorter axis of the level set ellipse corresponding to the principal eigenvector.

\end{document}